\documentclass[wcp]{jmlr}

\pdfoutput = 1

\usepackage{hyperref}
\usepackage{url}
\usepackage{rotating}% for sideways figures and tables
\usepackage{longtable}% for long tables
\usepackage{booktabs}
\usepackage[load-configurations=version-1]{siunitx} % newer version
\usepackage{siunitx}

% The following command is just for this sample document:

\usepackage{enumerate}
\usepackage{booktabs}
\input{header}

\jmlrvolume{29}
\jmlryear{2013}
\jmlrworkshop{ACML 2013}

\title[\uma]{Unconfused Ultraconservative Multiclass Algorithms}

 % Two authors with the same address
 \author{\Name{Ugo Louche} \Email{ugo.louche@lif.univ-mrs.fr} \AND
 \Name{Liva Ralaivola} \Email{liva.ralaivola@lif.univ-mrs.fr} \\
 \addr Qarma, Lab. d'Informatique Fondamentale de
 Marseille, CNRS, Aix-Marseille University, France}

\editor{Cheng Soon Ong and Tu Bao Ho}
% \editors{List of editors' names}

\begin{document}

\maketitle

\begin{abstract}

We tackle the problem of learning linear
classifiers from noisy datasets in a multiclass setting. The two-class version
of this problem was
studied a few years ago by,
e.g. \citet{bylander94learning} and 
\citet{blum96polynomialtime}: in these contributions, the proposed
approaches to fight the noise
revolve around a Perceptron learning scheme fed with peculiar
examples computed through a weighted average of points from the
noisy training set.
We propose to build upon these approaches and we introduce a new
algorithm called \uma (for Unconfused Multiclass additive Algorithm)
which may be seen as a generalization to the multiclass
setting of the previous approaches. 
In order to characterize the noise we use the
{\em confusion matrix} as a 
multiclass extension of the classification noise studied in the
aforementioned literature.
Theoretically well-founded, \uma furthermore displays very good
empirical noise robustness, as evidenced by numerical simulations
conducted on both synthetic  and real data.

%%% Local Variables: 
%%% mode: latex
%%% TeX-master: "unconfused"
%%% End: 

\end{abstract}

\begin{keywords}
Multiclass classification, Perceptron, Noisy labels, Confusion Matrix
\end{keywords}

\section{Introduction} \label{sec:Intro}

\paragraph{Context.} 
This paper deals with linear multiclass classification problems defined on an
input space $\inputspace$ (e.g., $\inputspace=\realset^d$) and a set
of classes $$\classes\doteq\{1,\ldots Q\}.$$ In particular, we are
interested in  establishing the robustness of {\em
ultraconservative additive} algorithms \citep{crammer03ultraconservative} to label noise classification in
the multiclass setting ---in order to lighten notation, we will now refer to these algorithms as {\em ultraconservative algorithms}. We study whether it  is possible to learn a linear predictor from a training set  
$$\trainingset\doteq\{(\bfx_i,y_i)\}_{i=1}^n$$
where $y_i\in\classes$ is a corrupted version of a {\em true}, i.e. deterministically
computed class, $t(\bfx_i)\in\classes$ associated with $\bfx_i$,
according to some {\em concept} $t$. The random noise process $Y$ that corrupts the label to provides the $y_i$'s given the $\V{x}_i$'s is
fully described by a {\em confusion matrix}
$\confusion=(\confusion_{pq})_{p,q}\in\realset^{Q\times Q}$ so that
$$\forall \V{x},\;\proba_{Y}(Y=p|\V{x})=\confusion_{pt(\V{x})}.$$
 The goal that we would like to achieve is to
provide a learning procedure able to deal with the {\em confusion
  noise} present in the  training set
$\trainingset$ to give rise  to a classifier $h$ with small risk
$\proba_{X\sim\distribution}(h(X)\neq t(X))$ ---$\distribution$
being the distribution of the $\bfx_i$'s. As we want to recover
from the confusion noise, we use the term {\em unconfused} to
characterize the procedures we propose.

\citet{crammer03ultraconservative} introduce ultraconservative
online learning
algorithms, which output multiclass linear predictors of the form $$f(\V{x}) =
\argmax_r \langle \V{w}_r, \V{x} \rangle.$$ When processing a training
pair $(\V{x},y)$, these procedures perform updates of the
form
$$\V{w}_q^{\text{new}}\leftarrow\V{w}_q+\tau_q\V{x}, \; q=1,\ldots Q,$$
so that:
\begin{itemize}
\item if $y=\argmax_q\langle\V{w}_q,\bfx\rangle$, then
  $\tau_1=\cdots=\tau_Q=0$ (no update is made);
\item otherwise (i.e. $y\neq\argmax_q\langle\V{w}_q,\bfx\rangle$),
  then, given ${\cal E}\doteq\{r:r\neq y,
  \langle \V{w}_r,\V{x}\rangle \geq \langle \V{w}_y,\V{x}\rangle\}$,
  the $\tau_q$'s verify: a) $\tau_y=1$, b) $\forall
  q\in{\cal E}, \tau_q\leq 0$, c) $\forall
  q\not\in\mathcal{E}\union\{y\}, \tau_q=0$, and d) $\sum_q^{Q}\tau_q=0$. 
\end{itemize}
Ultraconservative learning procedures, have very nice
theoretical properties regarding their convergence in the case of
linearly separable datasets, provided a sufficient separation {\em
  margin} is guaranteed (as formalized in Assumption~\ref{ass:margin}
below). In turn, these convergence-related properties
yield generalization guarantees about the quality of the predictor learned. We build upon these nice convergence
properties to show that ultraconservative algorithms are robust to a confusion noise process,
provided an access to the confusion matrix  ${\cal C}$ is granted and this paper is essentially
devoted to proving how/why  ultraconservative multiclass algorithms are
indeed robust to such situations.
To some extent, the results provided in the present contribution may be viewed as a generalization of the contributions on
learning binary perceptrons under misclassification noise \citep{blum96polynomialtime,bylander94learning}.

Besides the theoretical questions raised by the learning setting
considered, we may depict the following example of an actual learning
scenario where learning from noisy data is relevant. This learning
scenario will be further investigated from an empirical standpoint in
the section devoted to numerical simulations (Section~\ref{sec:Expe}).
\begin{example}
\label{ex:redwire}
One  situation where coping with mislabelled data is required
arises in scenarios where labelling data is very expensive. Imagine a
task of text categorization from a training set
$\trainingset=\trainingset_{\ell}\union\trainingset_u$, where 
$\trainingset_{\ell}=\{(\bfx_i,y_i)\}_{i=1}^n$ is a set of $n$
labelled training examples and $\trainingset_u=\{\bfx_{n+i}\}_{i=1}^m$ is
a set of $m$ unlabelled vectors; in order to fall back to a realistic
training scenario, we may assume that $n<<m$. A possible three-stage strategy to
learn a predictor is as follows: first learn a predictor $f_{\ell}$ on
$\trainingset_{\ell}$ and estimate its confusion $\confusion$ error {\em via} a
cross-validation procedure, second, use the learned predictor to label all the data
in $\trainingset_u$ to produce
the labelled traning set $\widehat{\trainingset}=\{(\bfx_{n+i},t_{n+i}:=f(\bfx_{n+i}))\}_{i=1}^m$ and finally,
learn a classifier $f$ from $\widehat{\trainingset}$ {\em and} the
confusion information $\confusion$.
\end{example}

\paragraph{Contributions}
Our main contribution is to show that it is both 
practically and theoretically possible to learn a
multiclass classifier on noisy data as long as some 
information on the noise process is available.
We propose a way to compute update vectors for
any ultraconservative algorithm which
allows us to handle massive amount of mislabeled 
data without consequent loss of accuracy.
Moreover, we provide a thorough analysis of
our method and show that the strong
theoretical guarantees that caracterize
the family of ultraconservative algorithm carry over to the noisy scenario.

\iffalse
We also propose a completely new (to the best of 
our knowledge) way to cope with the situation where
a perfect classifier does not exist (i.e. the problem is 
not linearly separable) by solving an other
multiclass linear problem used as proxy. 
In the bi-class case, this method can also be seen
as a new way to learn from noisy dataset in a similar
way than what has been proposed by \emph{Bylander, Blum}
\fi

\paragraph{Organization of the paper.} 
Section \ref{sec:setting} formally states the setting
we consider throughout this paper. Section~\ref{sec:uma}
provides the details of our main contribution: the \uma
algorithm. Section~\ref{sec:Expe}
presents numerical simulation that support the soundness of our approach.

%%% Local Variables: 
%%% mode: latex
%%% TeX-master: "unconfused"
%%% End: 

%Model notation and normalization goes here

\section{Setting and Problem} \label{sec:setting}
\subsection{Noisy Labels with Underlying Linear Concept}
The probabilistic setting we consider hinges on the existence of two
components. On the one hand, we assume an
unknown (but fixed) probability distribution $\UKDistri$ on
the {\em intput space} $\inputspace\doteq\realset^d$;   without loss of generality, we suppose that
 $\proba_{X\sim\UKDistri}(\|X\|=1)=1$,
where $\|\cdot\|$ is the Euclidean norm. On the other hand, we
also assume the existence of a deterministic labelling function
$t:\inputspace\to\classes$, where $\classes\doteq\{1,\ldots Q\}$, which
associates a label $t(\bfx)$ to any input example $\bfx$; in the
{\em Probably Approximately Correct} literature, $t$ is sometimes  referred
to as a {\em concept}
\citep{kearns94introduction,valiant84theory}. Throughout, we assume the
following:
\begin{assumption}[Linear Separability with $\margin$ Margin.]
\label{ass:margin}
Concept $t$
is such that there exists a {\em compatible linear} classifier $f^*$ with margin
$\margin>0$. This means that 
\begin{equation}
\proba_{X\sim\UKDistri}(f^*(X)\neq t(X))= 0,
\tag{comptability of $f^*$ wrt $t$}
\end{equation}
and  there exist $\margin>0$ and  $W^*=[\bfw_1^*\cdots\bfw_Q^*]\in\realset^{d\times
  Q}$ such that 
\begin{align}
&\forall\bfx\in\inputspace,\; f^{*}(\bfx)=\argmax\nolimits_{q\in\classes}\langle\bfw_q,\bfx\rangle,\tag{$f^*$
  is a linear classifier}\\
&\proba_{X\sim \UKDistri}\left\{\exists p\neq t(X):\left\langle\bfw^*_{t(X)}-\bfw^*_{p},X\right\rangle\leq\margin\right\}=0.\tag{$f^*$
has margin $\margin$ wrt $t$ and $\UKDistri$}
\end{align}
(Here, $\langle\cdot,\cdot\rangle$ denotes the canonical inner product
of $\realset^d$.)
\label{ass:linear}
\end{assumption}

In a usual setting, one would be asked to learn a
classifier $f$ from a training set
$$\inputset_{\text{true}}\doteq\{(\V{x}_i,t(\V{x}_i)\}_{i=1}^n$$ made of $n$
labelled pairs from $\inputspace\times\classes$ such that the
$\V{x}_i$'s are independent realizations of a random variable $X$
distributed according to $\UKDistri$, with the objective of minimizing
the \emph{true risk} or {\em misclassification
  error} $\risk_{\terror}(f)$ of $f$ given by
\begin{align} 
  \risk_{\terror}(f) \doteq \proba_{X\sim\UKDistri}(f(X) \neq t(X)).
\end{align}
In other words, the objective is for $f$ to have a prediction behavior
as close as possible to that of $t$.

As announced in the introduction, there is a little twist in the problem
that we are going to tackle. Instead of having direct access to
$\inputset_{\text{true}}$, we assume that we only have access to a
corrupted version 
\begin{align}
  \inputset\doteq\lbrace (\V{x}_i, y_i) \rbrace_{i=1}^n
\end{align}
where each $y_i$ is the realization of a random variable $Y$ whose
law $\UKDistri_{Y|X}$ (which is conditioned on $X$) is so that the conditional distribution
$\proba_{Y\sim\UKDistri_{Y|X=\V{x}}}(Y|X=\bfx)$ is fully summarized into a {\em
  known} confusion matrix $\confusion$ given by
\begin{align}
  \forall \V{x},\; \confusion_{pt(\V{x})} \doteq \proba_{Y\sim\UKDistri_{Y|X=\V{x}}}(Y = p |X =
  \bfx)=\proba_{Y\sim\UKDistri_{Y|X=\V{x}}}(Y = p |t(\V{x}) = q).
\label{eq:confusion}
\end{align}
Henceforth, the noise process that corrupts the data is {\em uniform}
within each class and its level does not depend on the precise location of
$\bfx$ within the region that corresponds to class $t(\V{x})$. 
Noise process $Y$ is both very aggressive, as it does not
only apply, as we may expect, to regions close to the boundaries
between classes and very regular,  as the 
mislabelling rate is piecewise constant.

The setting we assume allows us to view $\inputset$ as the
realization of a random sample $\{(X_i,Y_i)\}_{i=1}^n$, where each
pair $(X_i,Y_i)$ is an independent copy of the random pair $(X,Y)$ of
law $\UKDistri_{XY}\doteq\UKDistri_X\UKDistri_{X|Y}.$

\subsection{Problem: Learning a Linear Classifier from Noisy Data}
The problem we address is the learning of a classifier $f$ from
$\inputset$ {\em and} $\confusion$ so that the error
rate $$\risk_{\terror}(f)=\proba_{X\sim\UKDistri}(f(X)\neq t(X))$$ of
$f$, is as small as possible: the usual goal of learning a
classifiier $f$ with small risk is preserved, while now the training data
is only made of corrupted labelled pairs.

Building on Assumption~\ref{ass:linear}, we may refine our learning
objective by restricting ourselves to linear classifiers $f_W$, for
$W=[\V{w}_1\cdots\V{w}_Q]\in\realset^{d\times Q}$ such that 
\begin{equation}
\label{eq:linearprediction}
\forall\bfx\in\inputspace,\;
f_W(\bfx)\doteq\argmax\nolimits_{q\in\classes}\langle\bfw_q,\bfx\rangle,
\end{equation}
and our goal is thus to learn a relevant matrix $W$ from
$\inputset$ {\em and} the confusion information $\confusion$.

%%% Local Variables: 
%%% mode: latex
%%% TeX-master: "unconfused"
%%% End: 
 %aka definition

\section{{\sc Uma}: Unconfused Ultraconservative Multiclass Algorithm}
\label{sec:uma}

\subsection{Main Result and High Level Justification}
This section presents our main contribution, \uma, a theoretically grounded noise-tolerant
multiclass algorithm depicted in Algorithm~\ref{alg:uma}. \uma learns
and outputs a matrix $W=[\V{w}_1\cdots\V{w}_Q]\in\realset^{d\times Q}$
from a noisy training set $\trainingset$ to produce the associated
classifier
\begin{equation} \label{eq:additive}
f_W(\cdot)=\argmax_q\langle\bfw_q,\cdot\rangle
\end{equation}
by iteratively
updating the $\V{w}_q$'s, whilst maintaining $\sum_q\V{w}_q=0$
throughout the learning process. We may already recognize the 
generic {\em step sizes} promoted by ultraconservative algorithms in
step~\ref{step:uc1} and step~\ref{step:uc2} of the algorithm
\citep{crammer03ultraconservative}.
 An important feature of \uma
is that it only uses information provided by $\trainingset$ and does
not make assumption on the accessibility to the noise-free dataset $\trainingset_{\text{true}}$.

\begin{algorithm}[tbh]
\caption{\uma:  Unconfused Ultraconservative Multiclass Algorithm\label{alg:uma}}
\begin{algorithmic}[1]
\REQUIRE $\inputset= \{(\V{x}_i, y_i)\}_{i=1}^{n}$,
$\confusion\in\realset^{Q\times Q}$, confusion matrix and $\alpha>0$
\ENSURE $W=\left[\V{w}_1, \ldots, \V{w}_K \right]$ and classifier $f_W(\cdot)=\argmax_{q}\langle\bfw_q,\cdot\rangle$\vspace{3mm}
\STATE $\V{w}_k\leftarrow 0$, $\forall k\in\classes$
\REPEAT % {some stopping criterion is not met}
\STATE select $p$ and $q$ \label{step:select}
\STATE $\setA{p}^{\alpha} \leftarrow  \left\{ \V{x} |\V{x}\in\inputset \wedge 
        \dotProd{\V{w}_p}{\V{x}} 
        - \dotProd{\V{w}_k}{\V{x}} 
        > \alpha,\;\forall k\neq p\right\}$\label{step:mistakestart}
\STATE $\gamma_k^p\leftarrow\frac{1}{n}\sum_{i:y_i=k\wedge
        \V{x}_i\in \setA{p}^{\alpha}}\V{x}_i^{\top}$, $\forall k\in\classes$\label{step:gamma_kp}
\STATE form $\Gamma^p\in\realset^{Q\times d}$ as\\
\hspace{2mm}$\Gamma^p\leftarrow\left[\begin{array}{c}\gamma_1^p\\
    \vdots\\ \gamma_Q^p\end{array}\right],$
\STATE  compute the update vector $\xup{pq}$ according to\\
\hspace{2mm}${\V{z}}_{pq}\leftarrow ([\confusion^{-1}\Gamma^p]_q)^{\top}$,\quad(where
$[A]_q$ refers to the $q$th row of matrix $A$) \label{step:mistakeend}
\STATE compute the error set\\
\hspace{2mm}${\cal E}^{\alpha}\leftarrow\{r\in\classes:r\neq q,
\langle\bfw_r,\V{z}_{pq}\rangle-\langle\bfw_q,\V{z}_{pq}\rangle\geq\alpha\}$\label{step:uc1}
\STATE compute some {\em ultraconservative} update steps
$\tau_1,\ldots,\tau_Q$ such that:\\
\hspace{2mm} $\sum_{r=1}^Q\tau_r=0$ and $\left\{\begin{array}{l}\tau_q=1\\ \tau_r\leq 0,\forall
r\in{\cal E}^{\alpha}\\ \tau_r=0,\text{ otherwise}\end{array}\right.$\label{step:uc2}
\STATE perform the updates\\
 \hspace{2mm}$\V{w}_r \leftarrow \V{w}_r +\tau_r \xup{pq}$
%\hspace{2mm} $\V{w}_q \leftarrow \V{w}_q + \xup{pq}$
\UNTIL{$\|{\bf z}_{pq}\|$ is too small}\label{step:criterion}
\end{algorithmic}
\end{algorithm}

Establishing that under some conditions \uma stops and provides a
classifier with small risk is the purpose of the following
subsections; we will also discuss the unspecified
step~\ref{step:select}, dealing with the selection step.

For the impatient reader, we may already leak some
of the ingredients we use to prove the relevance of our procedure.
The pivotal result regarding the convergence of ultraconservative
algorithms is ultimately a generalized Block-Novikoff
theorem~\citep{crammer03ultraconservative,minsky69perceptrons}, which
  rests on the analysis of the updates made when
training examples are misclassified by the current classifier. If 
the training problem is linearly separable with a positive
margin, then the number of updates/mistakes can be (easily) bounded,
which establishes the convergence of the algorithms. The conveyed
message is therefore that examples that are erred upon are central to
the convergence analysis. It turns out that
step~\ref{step:mistakestart} through~\ref{step:mistakeend} of \uma
(cf. Algorithm~\ref{alg:uma}) construct,
with high probabilty, a point $\xup{pq}$ that is mistaken on. More
precisely, the true class $t(\xup{pq})$ of $\xup{pq}$ is $q$ and it is predicted to be of
class $p$ by the current classifier; at the same time, these
update vectors are guaranteed to realize a positive margin
condition with respect to $W^*$:
$\langle\V{w}_q^*,\xup{pq}\rangle>\langle\V{w}_k^*,\xup{pq}\rangle$
for all $k\neq q$. The ultraconservative feature of the algorithm is
carried by step~\ref{step:uc1} and step~\ref{step:uc2}, which make it
possible to update any prototype vector $\bfw_r$ with $r\neq q$ having
an inner product $\langle\bfw_r,{\bf
  z}_{pq}\rangle$ with ${\bf z}_{pq}$ larger than $\langle\bfw_q,{\bf
  z}_{pq}\rangle$ (which should be the largest if a correct prediction
were made). The reason why we have results `with high
probability' is because the $z_{pq}$'s are (sample-based) estimates
of update vectors known to be of class $q$ but predicted as being of
class $p$, with $p\neq q$;
computing the accuracy of the sample estimates is one of the important
exercises of what follows. A control on the accuracy makes it possible
for us to then establish the convergence of the proposed algorithm.
In order to ease the analysis we conduct, we assume the following. 
\begin{assumption} From now on, we make the assumption that
$\confusion$ is invertible. Investigating learnability under a milder
constraint is something that goes beyond the scope of the present
paper and that we left for future work.
\end{assumption}
From a practical standpoint, it is worth noticing that there are many
situations where the
confusion matrices are diagonally dominant, therefore invertible.

\subsection{${\bf z}_{pq}$ is Probably a Mistake with Positive Margin} \label{sec:UpnAl}
Here, we prove that the update vector $\xup{pq}$ given in
step~\ref{step:mistakeend} is, with high  probability, a point on
which the current classifier errs.

\begin{proposition}
\label{prop:tilde_update}
Let $W=[\V{w}_1\cdots \V{w}_Q]\in\realset^{d\times Q}$ and $\alpha\geq
0$ be fixed.
Let $\setA{p}^{\alpha}$ be defined as in step~\ref{step:mistakestart} of Algorithm~\ref{alg:uma}, i.e:
\begin{equation}
\setA{p}^{\alpha}\doteq \left\{ \V{x} |\V{x}\in\inputset \wedge 
        \dotProd{\V{w}_p}{\V{x}} 
        - \dotProd{\V{w}_k}{\V{x}} 
        > \alpha,\;\forall k\neq p\right\}.
\label{eq:Ap}
\end{equation}
For $k\in\classes$, $p\neq k$, consider the random
variable $\gamma_{k}^p$:
$$\gamma_{k}^p\doteq\frac{1}{n}\sum_{i}\indicator{Y_i=k}\indicator{X_i\in\setA{p}^{\alpha}}X_i^{\top},$$
 ($\gamma_{k}^p$ of step~\ref{step:gamma_kp} of Algorithm~\ref{alg:uma} is a realization
of this variable, hence the overloading of notation $\gamma_{k}^p$).

The following holds, for all $k\in\classes$:
\begin{equation}
\expectation_{\trainingset}\left\{\gamma_{k}^p\right\}=\expectation_{\{(X_i,Y_i)\}_{i=1}^n}\left\{\gamma_{k}^p\right\}=\sum_{q=1}^Q\confusion_{kq}\mu_q^p,
\end{equation}
where
\begin{equation}\mu_q^p\doteq\expectation_{X}\left\{\indicator{t(X)=q}\indicator{X\in\setA{p}^{\alpha}}X^{\top}\right\}.
\end{equation}
\end{proposition}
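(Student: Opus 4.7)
The plan is to peel back the expectation in three moves: use linearity and the i.i.d. assumption to reduce to a single-sample expectation, split on the (deterministic) value of $t(X)$ using the partition of unity $\sum_{q} \indicator{t(X)=q} = 1$, and finally apply the tower property to turn the inner $\indicator{Y=k}$ into the corresponding entry of $\confusion$.

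\medskip

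First I would write, by linearity of expectation and the fact that each $(X_i,Y_i)$ is an i.i.d.\ copy of $(X,Y)\sim\UKDistri_{XY}$,
\begin{equation*}
\expectation_{\trainingset}\left\{\gamma_{k}^p\right\}
=\expectation_{(X,Y)}\left\{\indicator{Y=k}\indicator{X\in\setA{p}^{\alpha}}X^{\top}\right\}.
\end{equation*}
Since $t$ is a deterministic function of $X$ taking values in $\classes$, inserting $\sum_{q=1}^Q \indicator{t(X)=q}=1$ gives
\begin{equation*}
\expectation_{(X,Y)}\left\{\indicator{Y=k}\indicator{X\in\setA{p}^{\alpha}}X^{\top}\right\}
=\sum_{q=1}^Q\expectation_{(X,Y)}\left\{\indicator{Y=k}\indicator{t(X)=q}\indicator{X\in\setA{p}^{\alpha}}X^{\top}\right\}.
\end{equation*}

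\medskip

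Next I would condition on $X$ inside each summand. Equation~\eqref{eq:confusion} says precisely that $\proba(Y=k\mid X)=\confusion_{kt(X)}$, so on the event $\{t(X)=q\}$ the conditional probability collapses to the constant $\confusion_{kq}$. The tower property then yields
\begin{equation*}
\expectation_{(X,Y)}\left\{\indicator{Y=k}\indicator{t(X)=q}\indicator{X\in\setA{p}^{\alpha}}X^{\top}\right\}
=\confusion_{kq}\,\expectation_X\left\{\indicator{t(X)=q}\indicator{X\in\setA{p}^{\alpha}}X^{\top}\right\}
=\confusion_{kq}\,\mu_{q}^{p}.
\end{equation*}
Summing over $q\in\classes$ gives the claimed identity.

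\medskip

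There is no real obstacle here; the statement is essentially a bookkeeping exercise that makes the noise model \eqref{eq:confusion} explicit in terms of the class-conditional sample means. The only point that warrants care is making sure the conditioning is done correctly: $Y$ is independent of $X$ given $t(X)$, but since $t(X)$ is $X$-measurable, conditioning on $X$ is enough, and on $\{t(X)=q\}$ the random coefficient $\confusion_{kt(X)}$ becomes the deterministic scalar $\confusion_{kq}$ that can be pulled out of the expectation.
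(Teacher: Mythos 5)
Your proof is correct and follows essentially the same route as the paper's: both reduce to a single-sample expectation via linearity and the i.i.d.\ assumption, condition on $X$ to replace $\indicator{Y=k}$ by $\confusion_{kt(X)}$ using \eqref{eq:confusion}, and decompose over the classes via $\sum_{q}\indicator{t(X)=q}=1$ (the paper merely performs the conditioning and the class decomposition in the opposite order, which is immaterial). No gaps.
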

\begin{proof}
Let us compute
$\expectation_{XY}\{\indicator{Y=k}\indicator{X\in\setA{p}^{\alpha}}X^{\top}\}$:
\begin{alignsize*}{\small}
\expectation_{XY}\{\indicator{Y=k}\indicator{X\in\setA{p}^{\alpha}}X^{\top}\}&=\int_{\inputspace}\sum_{q=1}^Q\indicator{q=k}\indicator{\V{x}\in\setA{p}^{\alpha}}\V{x}^{\top}\proba_{Y}(Y=q|X=\V{x})d\UKDistri_{X}(\V{x})\\
&=\int_{\inputspace}\indicator{\V{x}\in\setA{p}^{\alpha}}\V{x}^{\top}\proba_{Y}(Y=k|X=\V{x})d\UKDistri_{X}(\V{x})\\
&=\int_{\inputspace}\indicator{\V{x}\in\setA{p}^{\alpha}}\V{x}^{\top}\confusion_{kt(\bfx)}d\UKDistri_{X}(\V{x})\tag{cf.~\eqref{eq:confusion}}\\
&=\int_{\inputspace}\sum_{q=1}^Q\indicator{t(\V{x})=q}\indicator{\V{x}\in\setA{p}^{\alpha}}\V{x}^{\top}\confusion_{kq}d\UKDistri_{X}(\V{x})\\
&=\sum_{q=1}^Q \confusion_{kq}\int_{\inputspace}\indicator{t(\V{x})=q}\indicator{\V{x}\in\setA{p}^{\alpha}}\V{x}^{\top}d\UKDistri_{X}(\V{x})=\sum_{q=1}^Q \confusion_{kq}\mu_q^p,
\end{alignsize*}
where the next-to-last line comes from the fact that the classes are
non-overlapping. The fact that the $n$ pairs $(X_i,Y_i)$ are identically
and independently distributed give the result.
\end{proof}

Intuitively, $\mu_q^p$  must be seen as an example of class $p$ which is erroneously predicted
as being of class $q$. Such an example 
is precisely what we are looking for to update the current classifier;
as expecations cannot be computed, the estimate $\xup{pq}$  of
$\mu_q^p$ is used instead of $\mu_q^p$. 

\begin{proposition} \label{prop:tilde_update2}
Let $W=[\V{w}_1\cdots \V{w}_Q]\in\realset^{d\times Q}$ and $\alpha\geq
0$ be fixed.
For $p,q\in\classes$, $p\neq q$, ${\V{z}}_{pq}\in\realset^d$ is
such that
\begin{align}
&\expectation_{XY}{{\V{z}}_{pq}}=\mu_{q}^p\\
&\langle\V{w}_q^*,\mu_{q}^p\rangle -
\langle\V{w}_k^*,\mu_{q}^p\rangle\geq\theta,\;\forall
k\neq q,\label{eq:mumargin}\\
&\langle\V{w}_p,\mu_{q}^p\rangle -
\langle\V{w}_k,\mu_{q}^p\rangle> \alpha,\;\forall k\neq p.\label{eq:muerror}
\end{align}
(Normally, we should consider the transpose of $\mu_{q}^p$, but since
we deal with vectors of $\realset^d$ ---and not matrices--- we omit
the transpose
for sake of readability.)

This means that 
\begin{enumerate}[i)]
\item $t(\mu_{q}^p)=q$, i.e. the `true' class of $\mu_{q}^p$ is $q$;
\item  and $f_W(\mu_{q}^p)=p$: $\mu_{q}^p$ 
is therefore misclassified by the current classifier $f_W$.
\end{enumerate}
\end{proposition}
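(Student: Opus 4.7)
The plan is to verify the three displayed identities/inequalities in order; the two qualitative conclusions (i)--(ii) then fall out immediately by reading off the $\argmax$.

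For the expectation identity, I would stack Proposition~\ref{prop:tilde_update} into a matrix statement. Let $M^p \in \realset^{Q \times d}$ be the matrix whose $q$th row is $\mu_q^p$. Proposition~\ref{prop:tilde_update} asserts row by row that $\expectation\{\Gamma^p\} = \confusion M^p$, so by linearity of expectation and the standing invertibility assumption on $\confusion$, $\expectation\{\confusion^{-1}\Gamma^p\} = M^p$. Reading off the $q$th row and transposing matches the definition of $\V{z}_{pq}$ in step~\ref{step:mistakeend} of Algorithm~\ref{alg:uma}, giving $\expectation_{XY}\V{z}_{pq} = \mu_q^p$.

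For the two margin-type inequalities, the recipe is to push the linear functional $\langle \V{w},\cdot\rangle$ inside the expectation defining $\mu_q^p$. For the inequality~\eqref{eq:mumargin} involving $W^*$, linearity yields
\[
\langle \V{w}_q^* - \V{w}_k^*,\mu_q^p\rangle
= \expectation_X\bigl\{\indicator{t(X)=q}\indicator{X\in\setA{p}^{\alpha}}\langle \V{w}_q^* - \V{w}_k^*, X\rangle\bigr\},
\]
and on the event $\{t(X)=q\}$ the margin clause of Assumption~\ref{ass:margin} forces $\langle \V{w}_q^* - \V{w}_k^*, X\rangle > \theta$ almost surely, for every $k \neq q$. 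Inequality~\eqref{eq:muerror} is handled identically, this time invoking the very definition of $\setA{p}^{\alpha}$ in~\eqref{eq:Ap}, which guarantees $\langle \V{w}_p - \V{w}_k, X\rangle > \alpha$ on the event $\{X \in \setA{p}^{\alpha}\}$, for every $k \neq p$.

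Given these two inner-product inequalities, the conclusions are immediate: \eqref{eq:mumargin} says $\V{w}_q^*$ strictly dominates every other $\V{w}_k^*$ when paired with $\mu_q^p$, hence $t(\mu_q^p) = \argmax_k\langle \V{w}_k^*, \mu_q^p\rangle = q$ under the linear concept $f^*$; \eqref{eq:muerror} says the same thing with $p$ and the current $W$ in place of $q$ and $W^*$, so $f_W(\mu_q^p) = p$. The one bookkeeping subtlety I expect is that $\mu_q^p$ is an \emph{unnormalized} expectation, so the bounds~\eqref{eq:mumargin} and~\eqref{eq:muerror} really come out as $\theta\cdot\proba(t(X)=q,\,X\in\setA{p}^{\alpha})$ and $\alpha\cdot\proba(t(X)=q,\,X\in\setA{p}^{\alpha})$ respectively. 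This scalar prefactor is strictly positive whenever the conditioning event has positive mass---precisely the non-degenerate regime before the stopping test in step~\ref{step:criterion} triggers---and, being positive, preserves the sign of the comparison, so the $\argmax$ conclusions are unaffected.
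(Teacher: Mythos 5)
Your proof is correct and follows essentially the same route as the paper's: the expectation identity comes from stacking Proposition~\ref{prop:tilde_update} into the matrix equality $\expectation\{\Gamma^p\}=\confusion M^p$ and left-multiplying by $\confusion^{-1}$, while the two inequalities come from pushing the linear functionals inside the expectation defining $\mu_q^p$ and invoking, respectively, the margin clause of Assumption~\ref{ass:margin} on the event $\{t(X)=q\}$ and the definition~\eqref{eq:Ap} of $\setA{p}^{\alpha}$. Your closing remark is in fact \emph{more} careful than the paper: since $\mu_q^p$ is an unnormalized truncated expectation, the bounds literally obtained are $\theta\cdot\proba\bigl(t(X)=q,\,X\in\setA{p}^{\alpha}\bigr)$ and $\alpha\cdot\proba\bigl(t(X)=q,\,X\in\setA{p}^{\alpha}\bigr)$ rather than $\theta$ and $\alpha$ as displayed in~\eqref{eq:mumargin}--\eqref{eq:muerror}, a scaling the paper glosses over, and you correctly note that only the strict positivity of this prefactor is needed for the two $\argmax$ conclusions.
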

\begin{proof}
According to Proposition~\ref{prop:tilde_update},
$$\expectation_{XY}\left\{\Gamma^{p}\right\}=\expectation_{XY}\left\{\left[\begin{array}{c}\gamma_1^p\\
    \vdots\\ \gamma_Q^p\end{array}\right]\right\}=\left[\begin{array}{c}\expectation_{XY}\left\{\gamma_1^p\right\}\\
    \vdots\\
    \expectation_{XY}\left\{\gamma_Q^p\right\}\end{array}\right]=\left[\begin{array}{c}\sum_{q=1}^Q\confusion_{1q}\mu_q^p\\\vdots\\\sum_{q=1}^Q\confusion_{Qq}\mu_q^p\end{array}\right]=\confusion
\left[\begin{array}{c}\mu_1^p\\\vdots\\\mu_Q^p\end{array}\right].$$

Hence, inverting $\confusion$ and extracting the $q$th of the
resulting matrix equality gives that $\expectation\left\{{\V{z}}_{pq}\right\}=\mu_{q}^p$.

Equation~\eqref{eq:mumargin} is obtained thanks to
Assumption~\ref{ass:linear} combined with  the linearity of the
expectation. Equation~\eqref{eq:muerror} is obtained thanks to the
definition~\eqref{eq:Ap} of $\setA{p}^{\alpha}$ (made of points that are predicted to be of
class $p$)  and the linearity of the expectation.
\end{proof}

\begin{proposition}
\label{prop:zpqerror}
Let $\varepsilon>0$ and $\delta\in(0;1]$.
There exists a number $$
 n_0(\epsilon, \delta, d, Q) = O\left( \frac{1}{\epsilon^2} \left[
\ln\frac{1}{\delta} + \ln Q + d\ln\frac{1}{\epsilon}
\right] \right) $$
 such that if the number of training
samples is greater than $n_0$ then, with high probability
\begin{align}
&\langle\V{w}_q^*,\xup{pq}\rangle -
\langle\V{w}_k^*,\xup{pq}\rangle 
\geq \theta - \epsilon
\label{eq:zmargin} \\
&\langle\V{w}_p,\xup{pq}\rangle -
\langle\V{w}_k,\xup{pq}\rangle\geq 0,\;\forall k\neq p.\label{eq:zerror}
\end{align}
\end{proposition}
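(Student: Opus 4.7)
The plan is to derive the inequalities \eqref{eq:zmargin}--\eqref{eq:zerror} from their expectation counterparts \eqref{eq:mumargin}--\eqref{eq:muerror} of Proposition~\ref{prop:tilde_update2} by controlling the deviation of $\xup{pq}$ around its mean $\mu_q^p$. Writing
\[
\langle\V{w}_q^*,\xup{pq}\rangle-\langle\V{w}_k^*,\xup{pq}\rangle
= \langle\V{w}_q^*-\V{w}_k^*,\mu_q^p\rangle + \langle\V{w}_q^*-\V{w}_k^*,\xup{pq}-\mu_q^p\rangle,
\]
the first summand is at least $\theta$ by \eqref{eq:mumargin}, and Cauchy--Schwarz bounds the second below by $-\|\V{w}_q^*-\V{w}_k^*\|\,\|\xup{pq}-\mu_q^p\|$. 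The same decomposition applied to $\langle \V{w}_p-\V{w}_k,\xup{pq}\rangle$, combined with \eqref{eq:muerror}, shows that both desired inequalities follow once we establish $\|\xup{pq}-\mu_q^p\|\le c\,\epsilon$ with high probability, where $c$ depends only on $\alpha$ and the weight norms.

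Since $\xup{pq}=([\confusion^{-1}\Gamma^p]_q)^\top$ and, by Proposition~\ref{prop:tilde_update}, $\expectation[\Gamma^p]$ has rows $\sum_{q}\confusion_{kq}\mu_q^p$, the deviation $\xup{pq}-\mu_q^p$ is the $q$th row of $\confusion^{-1}(\Gamma^p-\expectation[\Gamma^p])$. It therefore suffices to bound $\|\gamma_k^p-\expectation[\gamma_k^p]\|$ for every $k\in\classes$: the multiplicative price is $\|\confusion^{-1}\|$, which is a constant under the invertibility assumption. For \emph{fixed} $W$ (hence fixed $\setA{p}^\alpha$), each $\gamma_k^p$ is an average of $n$ i.i.d.\ vectors bounded in norm by $1/n$ (since $\|X_i\|=1$), so vector Hoeffding yields $\|\gamma_k^p-\expectation[\gamma_k^p]\|\le O(\sqrt{\ln(1/\delta')/n})$ with probability $\ge 1-\delta'$. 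A union bound over the $Q$ values of $k$ contributes the $\ln Q$ term in the sample complexity.

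The real subtlety is that $\setA{p}^\alpha$ depends on the current matrix $W$, which itself is produced from the data, so a \emph{uniform} deviation bound over the class of candidate sets $\{\setA{p}^\alpha(W):W\in\realset^{d\times Q}\}$ is required. These sets are intersections of at most $Q-1$ linear half-spaces in $\realset^d$, a VC class of dimension $O(dQ)$. The standard recipe is: cover the unit sphere $S^{d-1}$ by an $\epsilon$-net of cardinality at most $(3/\epsilon)^d$, reduce the vector deviation to $\sup_{\V{v}\in S^{d-1}}|\langle\V{v},\gamma_k^p-\expectation[\gamma_k^p]\rangle|$, and apply scalar Hoeffding on the net jointly with VC uniform convergence for the indicators of $\setA{p}^\alpha$; a $1/n$-Lipschitz argument transfers the bound from the net to the whole sphere. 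The covering number of the sphere produces the $d\ln(1/\epsilon)$ term, the VC complexity is absorbed in the same term, and the Hoeffding scaling contributes the $\epsilon^{-2}$ factor.

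The hardest part is carrying out this uniform-in-$W$ concentration cleanly while keeping the dependence on $d$, $Q$ and $1/\epsilon$ in the logarithmic factors only; once that is in place, solving the resulting Hoeffding inequality for the required precision $\epsilon$ and failure probability $\delta$ yields the announced bound $n_0=O(\epsilon^{-2}[\ln(1/\delta)+\ln Q+d\ln(1/\epsilon)])$, and the two inequalities \eqref{eq:zmargin}--\eqref{eq:zerror} follow after choosing $\epsilon$ small enough that the Cauchy--Schwarz error is compatible with the slack $\alpha$ in \eqref{eq:muerror}.
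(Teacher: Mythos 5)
Your overall strategy is the same as the paper's: decompose $\langle\V{w}_q^*-\V{w}_k^*,\xup{pq}\rangle$ (and likewise $\langle\V{w}_p-\V{w}_k,\xup{pq}\rangle$) into the expectation term, handled by Proposition~\ref{prop:tilde_update2}, plus a deviation term to be controlled uniformly over $W$. Where you genuinely diverge is in how that deviation is controlled. The paper works directly with the scalar deviation $\langle\V{w}_p-\V{w}_q,\xup{pq}-\mupt{q}{p}\rangle$: it introduces the class of functions $\V{x}\mapsto\indicator{t(\V{x})=q}\indicator{\V{x}\in\setA{p}^{\alpha}}\langle\V{w}_p-\V{w}_q,\V{x}\rangle$, bounds its pseudo-dimension by $d+1$, and runs a symmetrization/double-sample/covering-number/Hoeffding argument, closing with a union bound over the $Q^2$ pairs $(p,q)$; the main-text proof then plugs the resulting inequality \eqref{eq:zproof} into the decomposition. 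You instead bound the Euclidean norm $\|\xup{pq}-\mupt{q}{p}\|$ by propagating the deviations of the $\gamma_k^p$'s through $\confusion^{-1}$ and finish with Cauchy--Schwarz, obtaining uniformity in $W$ from a net on the sphere combined with the VC class of intersections of half-spaces. Your route has one concrete advantage: it makes explicit where $\confusion^{-1}$ enters. The paper's appendix silently identifies $\xup{pq}$ with the empirical average of $\indicator{t(\V{x})=q}\indicator{\V{x}\in\setA{p}^{\alpha}}\V{x}$, which is not the quantity actually computed in step~\ref{step:mistakeend} (that one uses the noisy labels and $\confusion^{-1}$), so your $\|\confusion^{-1}\|$ factor is a needed correction rather than a detour. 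The costs of your route are an extra $\|\confusion^{-1}\|$ in the constants, a union bound over $k$, and one overclaim: the VC dimension $O(dQ)$ of the sets $\setA{p}^{\alpha}$ is not ``absorbed'' into the $d\ln(1/\epsilon)$ term --- carried out honestly it adds a term of order $dQ\log Q$ to $n_0$. In fairness, the paper has the same blind spot, since $\indicator{\V{x}\in\setA{p}^{\alpha}}$ depends on all $Q$ columns of $W$ and the appendix's claim that the pseudo-dimension is ``untouched'' is not justified. Both proofs also leave unaddressed that deducing \eqref{eq:zerror} from \eqref{eq:muerror} requires the slack $\alpha$ to dominate $\varepsilon\,\|\V{w}_p-\V{w}_k\|$ rather than just $\varepsilon$, since the uniform deviation bound is stated for normalized directions.
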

\begin{proof} 
The existence of $n_0$ rely on 
pseudo-dimension arguments.
 We defer this
part of the proof to Appendix~\ref{apd:first} and
we will directly assume here that if $n \geq n_0$,
then, with probability $1 - \delta$ :

\begin{align}
  | \dotProd{\V{w}_p - \V{w}_q}{\xup{pq}} -
  \dotProd{\V{w}_p - \V{w}_q}{\mupt{q}{p}} | \leq
 \varepsilon \label{eq:zproof}
\end{align}
for any $\V{W}$, $\xup{pq}$.

Proving \eqref{eq:zmargin} then proceed by observing that
\begin{align*}
\dotProd{\V{w}_q^* - \V{w}_k^*}{\xup{pq}} = \dotProd{\V{w}_q^* - \V{w}_k^*}{\mupt{q}{p}}
+ \dotProd{\V{w}_q^* - \V{w}_k^*}{\xup{pq} - \mupt{q}{p}}
\end{align*}
bounding the first part using Proposition $2$:
$$\dotProd{\V{w}_q^* - \V{w}_k^*}{\mupt{q}{p}} \geq \theta
$$
and the second one with \eqref{eq:zproof}.
A similar reasoning allows us to get \eqref{eq:zerror} by setting
$\alpha \doteq \varepsilon$ in $\setA{p}^{\alpha}$ .
\end{proof}
This last proposition essentially says that the update vectors ${\bf
  z}_{pq}$ that we compute are, with high probability, erred upon and
realize a margin condition $\theta/2.$

\subsection{Convergence and Stopping Criterion}
In order to arrive at the main result of our paper, we recall the
following Theorem of~\cite{crammer03ultraconservative}, which stated
the convergence of general ultraconservative learning algorithms.

\begin{theorem}[\cite{crammer03ultraconservative}] \label{Th:MultiNovikoff}
  Let $\{({\bfx}_i, t_i)\}_{i=1}^n$ be a linearly separable sample  such that there
  exists a separating linear classifier $W^*$ with margin
  $\margin$. Then {\em any}
  multiclass ultraconservative algorithm will make no more than $2 /
  {\margin}^2$ updates before convergence.
\end{theorem}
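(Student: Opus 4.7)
The plan is to adapt the classical Block--Novikoff potential-function argument to the multiclass setting, tracking two quantities along the trajectory of the weight matrix $W$: the inner product $\langle W, W^*\rangle_F := \sum_q \langle \bfw_q, \bfw_q^*\rangle$ and the squared Frobenius norm $\|W\|_F^2 = \sum_q \|\bfw_q\|^2$. Without loss of generality I normalize so that $\|W^*\|_F = 1$ (absorbing the scaling into $\theta$), and I use the normalization $\|\bfx_i\|=1$ assumed throughout the paper. The bookkeeping at a single update on $(\bfx, y)$ is:
\begin{align*}
\langle W^{\text{new}}, W^*\rangle_F &= \langle W, W^*\rangle_F + \sum_{q}\tau_q \langle \bfw_q^*, \bfx\rangle,\\
\|W^{\text{new}}\|_F^2 &= \|W\|_F^2 + 2\sum_q \tau_q \langle \bfw_q, \bfx\rangle + \sum_q \tau_q^2,
\end{align*}
and the whole proof is just about controlling these two quantities using the ultraconservative structure of the $\tau_q$'s.

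For the lower bound on $\langle W, W^*\rangle_F$, I use $\tau_y = 1$, $\tau_r \le 0$ on the error set $\mathcal{E}$, and $\sum_q \tau_q = 0$. The margin assumption gives $\langle \bfw_r^*, \bfx\rangle \le \langle \bfw_y^*, \bfx\rangle - \theta$ for every $r \neq y$, so (since $\tau_r \le 0$)
$$\sum_{r\in\mathcal{E}} \tau_r \langle \bfw_r^*, \bfx\rangle \;\ge\; \sum_{r\in\mathcal{E}}\tau_r (\langle \bfw_y^*, \bfx\rangle - \theta) \;=\; -\langle \bfw_y^*, \bfx\rangle + \theta,$$
using $\sum_{r\in\mathcal{E}}\tau_r = -1$. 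Adding the $\tau_y=1$ contribution cancels $\langle \bfw_y^*, \bfx\rangle$ and leaves a gain of at least $\theta$ per update.

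For the upper bound on $\|W\|_F^2$, the cross term $2\sum_q \tau_q \langle \bfw_q, \bfx\rangle$ is nonpositive: the definition of $\mathcal{E}$ gives $\langle \bfw_r, \bfx\rangle \ge \langle \bfw_y, \bfx\rangle$ for $r\in\mathcal{E}$, and since $\tau_r \le 0$ we get $\sum_{r\in\mathcal{E}}\tau_r \langle \bfw_r, \bfx\rangle \le -\langle \bfw_y, \bfx\rangle$, cancelling the $\tau_y = 1$ term. The quadratic term satisfies $\sum_q \tau_q^2 \le 2$: indeed $\tau_y^2 = 1$, and $\sum_{r\in\mathcal{E}}\tau_r^2 \le (\sum_{r\in\mathcal{E}}|\tau_r|)^2 = 1$ by the constraint $\sum_{r\in\mathcal{E}}\tau_r = -1$ with all $\tau_r \le 0$. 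So each update inflates $\|W\|_F^2$ by at most $2$.

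Concluding is then standard: after $M$ updates one has $M\theta \le \langle W, W^*\rangle_F \le \|W\|_F \|W^*\|_F \le \sqrt{2M}$ by Cauchy--Schwarz, hence $M \le 2/\theta^2$. I expect the only subtle step to be the bound $\sum_q \tau_q^2 \le 2$: this is where one really needs the full combinatorial structure of an ultraconservative update (the sign constraint, the normalization $\tau_y = 1$, and the balance $\sum_q \tau_q = 0$ acting together), whereas the other estimates follow essentially mechanically from the margin assumption and the definition of $\mathcal{E}$.
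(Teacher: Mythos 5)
Your argument is correct and is precisely the generalized Block--Novikoff potential argument that \cite{crammer03ultraconservative} use for this theorem; the paper itself states the result without proof and simply cites that source. All three key estimates (the per-update gain of $\theta$ in $\langle W, W^*\rangle_F$, the nonpositivity of the cross term, and the bound $\sum_q \tau_q^2 \le 2$ from the sign and balance constraints on the $\tau_q$'s) are handled correctly, and the Cauchy--Schwarz conclusion yields exactly the stated bound of $2/\theta^2$.
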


We arrive at our main result, which provides both
convergence and a stopping criterion. 
\begin{proposition}
\label{prop:convergence}
There exists a number $n_1$, polynomial in $d, 1/\H{\theta}, Q, 1/\delta$, such that
is the training sample is of size at least $n_1$, then, with high
probability ($1 - \delta$), \uma makes at most $O(1/\H{\theta}^2)$ 
 iterations for all $p,q$.
\end{proposition}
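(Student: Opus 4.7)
The plan is to apply the generalized multiclass Block-Novikoff theorem (Theorem~\ref{Th:MultiNovikoff}) to the sequence of update vectors $\{\xup{pq}\}$ produced by UMA. For this to work, I need to verify that, with high probability and \emph{uniformly over iterations}, each $\xup{pq}$ is (i) a genuine mistake for the current classifier and (ii) realizes a strictly positive margin with respect to the target classifier $W^*$. Both conditions are exactly what Proposition~\ref{prop:zpqerror} delivers at a single iteration.

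First, I would fix an effective margin $\H{\theta}$ strictly less than $\theta$, e.g.\ $\H{\theta} \doteq \theta/2$, and choose $\varepsilon$ in Proposition~\ref{prop:zpqerror} so that $\theta - \varepsilon \geq \H{\theta}$. Simultaneously I would set $\alpha = \varepsilon$ in step~\ref{step:mistakestart} of the algorithm, so that \eqref{eq:zerror} makes $\xup{pq}$ a genuine mistake for the current $W$ and \eqref{eq:zmargin} guarantees $\langle\V{w}_q^* - \V{w}_k^*, \xup{pq}\rangle \geq \H{\theta}$. These two properties together place the sequence of $\xup{pq}$ vectors in the hypothesis of Theorem~\ref{Th:MultiNovikoff} (with margin $\H{\theta}$), so that the ultraconservative updates of step~\ref{step:uc1}--\ref{step:uc2} can perform at most $T \doteq 2/\H{\theta}^2 = O(1/\H{\theta}^2)$ updates before the stopping criterion in step~\ref{step:criterion} triggers.

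Second, to lift the pointwise guarantee of Proposition~\ref{prop:zpqerror} to a uniform one over all iterations, I would invoke it with confidence parameter $\delta' \doteq \delta/T$ and union-bound. Since $n_0$ depends on the confidence parameter only through $\ln(1/\delta')$, and since $\ln(T/\delta) = O(\ln(1/(\H{\theta}\delta)))$, the resulting sample size $n_1 \doteq n_0(\varepsilon,\delta/T,d,Q)$ remains polynomial in $d, 1/\H{\theta}, Q, 1/\delta$ as required. Combining this uniform high-probability event with Theorem~\ref{Th:MultiNovikoff} yields the claimed $O(1/\H{\theta}^2)$ bound on the total number of iterations, whatever the selection rule in step~\ref{step:select} happens to be.

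The main obstacle I expect is the non-independence of the $\xup{pq}$'s across iterations: the current classifier $W$, which determines the set $\setA{p}^{\alpha}$ from \eqref{eq:Ap}, evolves during the algorithm, so Proposition~\ref{prop:zpqerror} cannot be applied naively to a data-dependent $W$. This is precisely why the deviation bound~\eqref{eq:zproof} underlying Proposition~\ref{prop:zpqerror} is stated \emph{uniformly over $W$} (which is where the pseudo-dimension argument deferred to the appendix enters). Once this uniformity is granted, the union bound over iterations has a mild circularity --- the number of iterations is the quantity one wishes to bound --- that is resolved by noting that Theorem~\ref{Th:MultiNovikoff} provides a \emph{deterministic} ceiling of $T$ updates as long as the margin/mistake properties hold on the particular sequence produced, so taking the union bound over $T$ possible iterations is sufficient.
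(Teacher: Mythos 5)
Your proof is correct and follows essentially the same route as the paper, which simply combines Proposition~\ref{prop:zpqerror} with Theorem~\ref{Th:MultiNovikoff} using the adjusted margin $\theta-\varepsilon$. You in fact supply details the paper glosses over (the uniformity over $W$ from the pseudo-dimension argument, which already makes your extra union bound over iterations redundant, and the resolution of the apparent circularity via the deterministic mistake ceiling), so your write-up is, if anything, more complete than the published one.
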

\begin{proof}
This results is obtained as a direct combination of
Proposition~\ref{prop:zpqerror} and Theorem~\ref{Th:MultiNovikoff},
with the adjusted margin $\theta - \varepsilon$ It comes as
in~\cite{blum96polynomialtime} that the conditional misclassification errors
$\proba(f_{W}(X)=p|Y=q)$ are all small.
\end{proof}

\iffalse
This proposition shows us that the stopping criterion to be used
is simply based on the monitoring of $\|{\bf z}_{pq}\|$: if this
norm is too small, it indicates that the error with respect
to the non noisy data is small enough.
\fi

\subsection{Selecting $p$ and $q$} \label{sec:pqselect}

So far, the question of 
selecting good pairs of values $p$ and $q$ to perform updates has been
left unanswered.  
Intuitively, we would like to focus on the pair $(p,q)$ for which the instantaneous
empirical misclassification rate is the highest. We want to privilege those pairs $(p,q)$
because, first the induced update will
likely lead to a greater reduction of the error and  then, and more
importantly, because $\xup{pq}$ will be more reliable, as computed on
larger samples of data.
Another wise strategy for selecting $p$ and $q$ 
might be to select the pair $(p,q)$ for which the
empirical probability $\hat{\proba}(f_W(X)=p|Y=q)$ is the largest.

From these two strategies, we propose two possible computations
for $(p,q)$:
\begin{align}
  &(p,q)_{\terror} \doteq \arg\max_{(p,q)} \|{\bf z}_{pq}\|\\
  &(p,q)_{\tconf} \doteq \arg\max_{(p,q)}\frac{\|{\bf
      z}_{pq}\|}{\hat{\pi}_q}
\end{align}
where $\hat{\pi}_q$ is the estimated proportion of examples of true
class $q$ in the training sample. In a way similar to the computation of
${\bf z}_{pq}$ in Algorithm~\ref{alg:uma}, $\hat{\pi}_q$ may be
computed as follows:
$$\hat{\pi}_q=\frac{1}{n}[\confusion^{-1}{\hat{\bfy}}]_q,$$
where $\bfy\in\realset^Q$ is the vector containing the number of
examples from $\inputset$ having noisy labels $1,\ldots,Q$, respectively.

The second selection criterion is intended to normalize the number of
errors with respect to the proportions of different classes and aims at
being robust to imbalanced data.

%%% Local Variables: 
%%% mode: latex
%%% TeX-master: "unconfused"
%%% End: 
 %aka Algo

%\input{analysis}

\section{Experiments} \label{sec:Expe}

In this section, we present numerical results from empirical
evaluations of our approach and we
discuss different practical aspects of \uma. The ultraconservative
step sizes retained are those corresponding to a regular Percepton,
or, in short: $\tau_p=-1$ and $\tau_q=+1$.

Section \ref{sec:expe:toy} covers robustness results, based
on simulations with synthetic data while 
Section \ref{sec:expe:reuters} takes it a step further and
evaluates our algorithm on real data, with a realistic
noise process related to Example~\ref{ex:redwire} (cf. Section~\label{sec:intro}).

We essentially use the confusion rate as a performance measure,
which is the Frobenius norm of the confusion matrix estimated on 
a test set $S_{\text{test}}$  (independent from the training set)  and which is defined as: $$\fro{\H{\Conf}} = \sqrt{\sum_{i,j}
\H{\Conf}_{ij}^2},\text{ with }\H{\Conf}_{pq} = 
\frac{\sum_{\V{x}_i \in S_{\text{test}}} \indicator{t_i = q \wedge \H{y}_i = q}}
{\sum_{\V{x}_i \in S_{\text{test}}} \indicator{t_i = q}},$$
where $\widehat{y}_i$ is the label predicted for test instance ${\bf
  x}_i$ by the learned predictor.

\subsection{Toy dataset} \label{sec:expe:toy}

We use a $10$-class dataset with a total of roughly $1,000$
$2$-dimensional examples uniformly distributed over the
unit circle centered at the origin.
Labelling is achieved according to \eqref{eq:linearprediction},
where each of the 10 $\V{w}$ are randomly generated from the same
distribution than the examples (uniformly over the unit circle
centered at the origin).
Then a margin
$\margin = 0.025$ is enforced by removing examples
too close of the decision boundaries. 
Note that because of the way we generate them, all
the normal
vector $\V{w}$ of each hyperplanes are normalized. This
is  to prevent the cases
where the margin with respect to some given class is artificially high because
of the norm of $\V{w}$. Hence, 
all classes are guaranteed to be present in the training set as long
as the margin constraint does not preclude it\footnote{Practically, the case where three classes are so close to
  each other that
no examples can lie in the middle one because of the margin constraint
never occurred with such small margin.}.

The learned classifiers are tested against a similarly generated dataset
of $10,000$ points. The results reported in the tables are averaged over $10$ runs.

The noise is generated from the sole confusion
matrix. This
situation can be tough to handle and is rarely met with real
data but we stick with it as it is a good example of a
worst-case scenario.

We first evaluate the robustness to noise level
(Fig.~\ref{fig:noise_robustness}).
We randomly generate a reference stochastic matrix $M$ and
we define $N$ such that $M = I + 10 \times N$.
Then we run \uma $20$ times with
confusion matrix $\Conf$ ranging from $N$ to $20N$.
Figure~\ref{fig:noise_robustness} plots the confusion rate
against $\fro{\Conf}$.

The second experiment (Fig.~\ref{fig:noise_estimation}) evaluates the robustness
to errors in the estimation of the noise. We proceed in a 
similar way as before, but the data are always corrupted according to
$M$. The \emph{approximation} factor of $\Conf = i \times N$ 
is then defined as $1 - i/10$.  Figure~\ref{fig:noise_estimation} plots the confusion
rate against this approximation factor. 
Note that an approximation factor
of $1$ corresponds to the identity matrix and, in that case, \uma behaves
as a regular Perceptron. More generally, the noise is underestimated
when the approximation factor is positive, and overestimated when it is
negative.

\begin{figure}
  \centering
  \subfigure[Robustness to noise\label{fig:noise_robustness}]
{
%    \centering
    \includegraphics[width=0.40\textwidth]{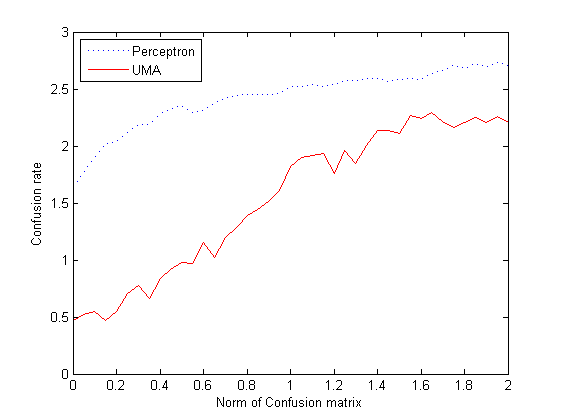}
} 
\subfigure[Robustness to noise estimation\label{fig:noise_estimation}]
{
    \includegraphics[width=0.40\textwidth]{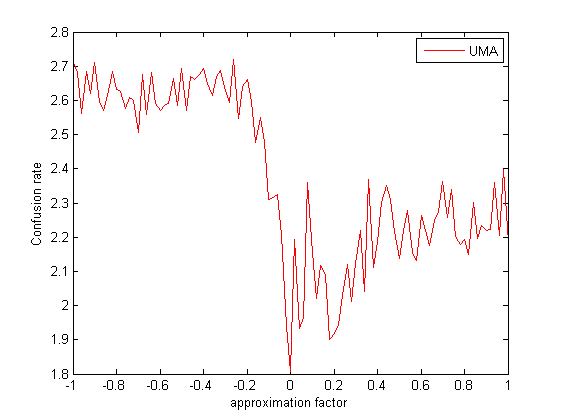}
}
  \caption{ (a) evolution of the Confusion rate 
for different noise
    levels; (b) evolution of the same quantity with 
respect to errors in the confusion matrix $\Conf$.}
\end{figure}

These first simulations show that \uma provides improvements
over the Perceptron for every noise level tested ---although its
performance obviously slightly degrades as the noise level increases. The second simulation
points out that, in addition to being robust to the noise process
itself,  \uma is also robust to underestimated
noise levels, but not to overestimated ones.

\subsection{Reuters} \label{sec:expe:reuters}

The Reuters dataset is a nearly linearly separable document categorization
dataset. The training (resp. test) set is made of
approximately $15,000$ (resp. $300,000$) examples in 
roughly $50,000$ dimensions spread over $50$ classes.

To put ourselves in a realistic learning scenario, we assume 
that labelling examples is
very expensive and we implement the strategy evoked in
Example~\ref{ex:redwire} (cf. introduction). Thus, we label only $3$ percent
(selected at random) of
the training set. Our strategy is to
learn a classifier over half of the labelled
examples, and then to use the classifier to label the entire training set. This way, we end up with a labelled training set 
with noisy labels, since $1.5$ percent of the whole training set is
evidently not sufficient to learn a reliable classifier. In order to
gather the information needed by \uma to learn, we estimate the confusion
matrix of the whole training set using the remaining 1.5\% correctly
labelled examples (i.e. those examples not used for learning the
labelling classifier).

However, it occurs that some classes are so under-represented that
they are flooded by the noise process and/or are 
not present in the original $1.5$ percent, which leads to 
a non-invertible confusion matrix. We therefore
restrict the dataset to the $19$ largest classes. One might
wonder whether doing so removes class imbalance. This is not
the case as the least represented class accounts for $198$ examples
while this number reaches $4,000$ for the most represented one.
As in Section~\ref{sec:expe:toy}, the results reported are an average over 10 runs
of the same experiment.

We use three variations of \uma with different strategies
for selecting $(p,q)$ (error, confusion, and random) and monitor
each one along the learning process. 

We also compare \uma with the regular Perceptron in order
to quantify the increase in  accuracy induced by the use of the
confusion matrix. We compare \uma with two different
settings, one with the noisy classes ({\tt Mperc}), 
and one with the true data ({\tt Mperc$_{\text{full}}$}). 

\begin{figure}[h]
  \includegraphics[scale=0.5]{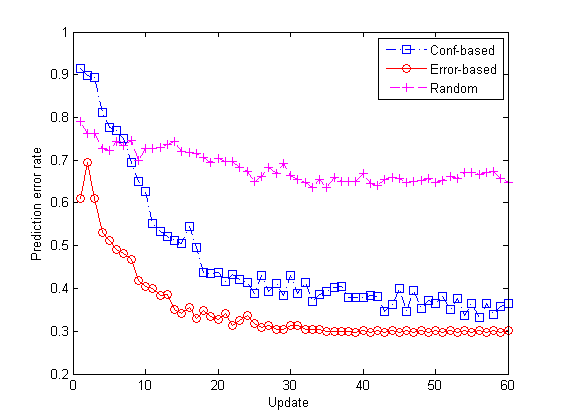}
  \includegraphics[scale=0.5]{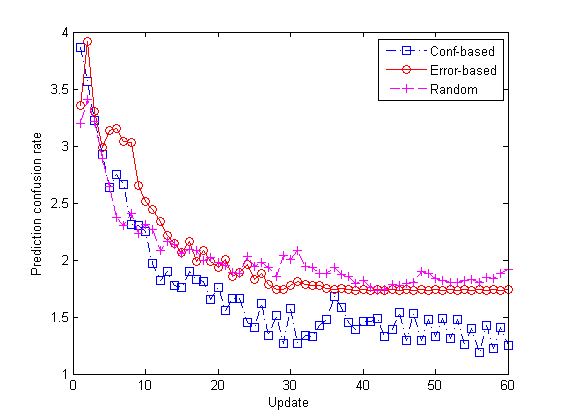}
  \caption{Error and confusion risk on Reuters dataset
  with various update strategies.}
  \label{fig:f3}
\end{figure}

From Figure \ref{fig:f3}, we observe that  both performance measures evolve
similarly, attaining a stable state around the $30^{\text{th}}$ 
iteration. The best strategy depends of the
performance measure used, even though it is noticeable the random strategy
is alway non-optimal.

As one might expect, the confusion-based strategy
performs better than the error-based one
 with respect to the confusion rate
while the converse holds when considering the error rate. This
observation motivates us to thoroughly study the confusion-based
strategy in a near future.

% As evidenced by Figure \ref{fig:f3}, there seems to
% exist some kind of trade-off between confusion rate and
% error rate as it is apparently not possible to minimize both. 
% This underlines
% the need to define a precise strategy for the updates. Moreover,
% we can see that a womewhat random strategy would not perform well.
% Figure \ref{fig:f3} also gives insight on the convergence rate of
% \uma as it appears that a plateau is reached around the 
% $30^{\text{th}}$ update.
The plateau reached around the $30^{\text{th}}$ may be puzzling,
since the studied dataset presents
no positive margin and convergence
is therefore not guaranteed.
However the non-existence of a linear separability  can be also
interpreted as label noise. Keeping
this in mind, our current setup is equivalent to the
one where noise is added after the estimation
of the confusion matrix, thus leading to an
underestimated confusion matrix. Nonetheless
it is worth noting that, according to Figure
 \ref{fig:noise_estimation}, \uma is robust to 
this situation, which explains the good results exhibited.

\begin{table}
  \centering
  \begin{tabular}{ccccc}
  \toprule
  Algorithm & {\tt M-perc} & {\tt M-perc$_{\text{full}}$} & \uma\\
  \midrule
  error rate & $0.482$ & $0.379$ & $0.381$ \\
  \bottomrule
  \end{tabular}
  \caption{Performances of different algorithms}
  \label{fig:tab}
\end{table}

Looking at  the results, it is clear that \uma successfully
recovers from the noise, attaining an error rate similar to
the one attained with the uncorrupted data (cf. Table~\ref{fig:tab}). However
it is good to temper this result as Reuters is not linearly
separable. We have already seen 
(Fig. \ref{fig:noise_estimation}, Figure~\ref{fig:f3})
that this is not really a problem for \uma, unlike the
regular multiclass Perceptron. Thus, the present setting,
while providing a great practical setup, is somewhat
biaised in favor of \uma.

Finally, note that the results of Section~\ref{sec:expe:reuters}
are quite better than those of Section~\ref{sec:expe:toy} .
Indeed, in the Reuters experiment, the noise is concentrated
around the decision boundaries because this is typically where
the labelling classifier would make errors. This kind of noise is
called monotonic noise and is a special, easier, case of 
 classification noise as discussed in 
\cite{bylander94learning}.

%%% Local Variables: 
%%% mode: latex
%%% TeX-master: "unconfused"
%%% End: 

%\input{discussion}

\section{conclusion} \label{sec:Concl}

In this paper, we have proposed a new algorithm to cope
with noisy examples in multiclass linear problems. 
This is, to the best of our knowledge, the first time
the confusion matrix is used as a way to handle noisy label
in multiclass problem.
Moreover, \uma does not consist in a binary mapping of
the original problem and therefore benefits from the good
theoretical guarantees of additive algorithms.
Besides, \uma can be adapted to a wide variety of 
additive algorithms, allowing to handle noise with
multiple methods.
On the sample complexity side, 
note that a more tight bound can
be derived with specific multiclass tools as
the Natarajan's dimension (see \cite{DanielySBS11} 
for example). However as it is not the core of this
paper we stuck with the classic analysis here.

To complement this work, we want to investigate
a way to properly tackle near-linear problems (as Reuters). 
As for now the algorithm already does a very good jobs thanks to its
noise robustness. However more work has to be done to derive a
proper way to handle case where a perfect classifier does not exists.
We think there
are great avenues for interesting research in this domain with an algorithm
like \uma and we are curious to see how this present work may carry
over to more general problems.

\iffalse
\acks{Acknowledgements should go at the end, before appendices and
  references.}
\fi

%\bibliographystyle{plainnat}
\bibliography{unconfused}

\appendix

\section{Double sample theorem}\label{apd:first}

\begin{proof}[Proposition \ref{prop:zpqerror}]

For a fixed couple $(p,q) \in \outputspace^2$  we consider
the family of functions
$$ \bigF_{pq} \doteq \lbrace f : f(\V{x}) \doteq 
\dotProd{\V{w}_p - \V{w}_q}{x} : \V{w}_p, \V{w}_q \in \realset^d
\rbrace $$ 
with the corresponding loss function $$l^{f}(\V{x}, y) \doteq l(f(\V{x}), y)
\doteq 1 + f(\V{x})$$

Clearly, $\bigF_{pq}$ is a subspace of affine functions, thus 
$\Pdim(\bigF_{pq}) \leq (n + 1)$, where $\Pdim(\bigF_{pq})$ is the
pseudo-dimension of $\bigF_{pq}$. Additionally, $l$ is Lipschitz
in its first argument with a Lipschitz factor of $L \doteq 1$ as
$\forall y, y_1, y_2, \in \outputspace :
 \vert l(y_1, y) - l(y_2, y) \vert = \vert y_1 - y_2 \vert$

Let $\UKDistri_{pq}$ be any distribution over $\inputspace \times
\outputspace$ and $T \in (\inputspace \times \outputspace)^m$ 
such that $T \sim \UKDistri_{pq}^m$, then define the 
\emph{empirical loss} $\err_T^l[f] \doteq \frac{1}{m}
\sum_{\V{x}_i \in T} l(\V{x}_i, y_i)$ and the
\emph{expected loss} $\err_{\UKDistri}^l [f] \doteq
 \Ex{\UKDistri}{l(\V{x},y)}$

The goal here is to prove that

\begin{align} \label{eq:result}
  \proba_{T \sim \UKDistri-{pq}^m} \left(
\sup_{f \in \bigF_{pq}} \vert \err_{\UKDistri}^l [f]
  - \err_{T}^l [f] \geq \epsilon
 \right) \leq O \left( 4 \times
 \left( \frac{8}{\epsilon} \right)^{(d+1)}
e^{ {m\epsilon^2}/_{128} } \right)
\end{align}

\begin{proof}[Proof of \eqref{eq:result}]
We start by noting that $l(y_1, y_2) \in [0, 2]$ and then proceed
with a classic $4$-step double sampling proof. Namely :

{\bf Symmetrization}
We introduce a \emph{ghost} sample $T' \in (\inputspace \times
\outputspace)^m$, $T' \sim \UKDistri_{pq}^m$ and show that for
$f^{\text{bad}}_T$ such that $ \vert \err_{\UKDistri_{pq}}^l
[f^{\text{bad}}_T] - \err_T^l [f^{\text{bad}}_T] \vert \geq \epsilon
$ then 

$$ \proba_{T' \vert T} \left(
\left| \err_{T'}^l [f^{\text{bad}}_T] - \err_{\UKDistri_{pq}}^l 
[f^{\text{bad}}_T] \right| \leq \frac{\epsilon}{2}
\right) \geq \frac{1}{2},$$

as long as $m\epsilon^2 \geq 32.$

It follows that 

\begin{align*}
& \proba_{(T,T') \sim \UKDistri_{pq}^m \times \UKDistri_{pq}^m}
\left( \sup_{f \in \bigF_{pq}} \vert \err_T^l [f] - \err_{T'}^l [f]
  \vert \geq \frac{\epsilon}{2}
\right) \\
&~ \geq \proba_{T \sim \UKDistri_{pq}^m} 
\left( \vert \err_{T}^l [f^{\text{bad}}_T] - \err_{\UKDistri_{pq}}^l
  [f^{\text{bad}}_T] \vert \geq \epsilon
\right) \times
\proba_{T' \vert T} \left(
\left| \err_{T'}^l [f^{\text{bad}}_T] - \err_{\UKDistri_{pq}}^l 
[f^{\text{bad}}_T] \right| \leq \frac{\epsilon}{2}
\right) \\
&~  = \frac{1}{2} \proba_{T \sim \UKDistri_{pq}^m} 
\left( \vert \err_{T}^l [f^{\text{bad}}_T] - \err_{\UKDistri_{pq}}^l
  [f^{\text{bad}}_T] \vert \geq \epsilon
\right) \\
&~  = \frac{1}{2} \proba_{T \sim \UKDistri_{pq}^m} 
\left( \sup_{f \in \bigF_{pq}} \vert \err_{T}^l [f] - \err_{\UKDistri_{pq}}^l
  [f] \vert \geq \epsilon
\right) \tag{By definition of $f^{\text{bad}}_T$}
\end{align*}

Thus upper bounding the desired probability by

\begin{align}
2 \times \proba_{(T,T') \sim \UKDistri_{pq}^m \times \UKDistri_{pq}^m}
\left( \sup_{f \in \bigF_{pq}} \vert \err_T^l [f] - \err_{T'}^l [f]
  \vert \geq \frac{\epsilon}{2}
\right) \label{eq:sndProba}
\end{align}

{\bf Swapping Permutation}
Let define $\Gamma_{m}$ the set of all
permutations that swap one or more elements of
$T$ with the corresponding element of $T'$ (i.e.
the $i$th element of $T$ is swapped with 
the $i$th element of $T'$). It is quite 
immediate that $\vert \Gamma_{m} \vert = 2^m$.
For each permutation $\sigma \in \Gamma_m$ we
note $\sigma(T)$ (resp. $\sigma(T')$) the set
originating from $T$ (resp. $T'$) from which
the elements have been swapped with $T'$
(resp. $T$) according to $\sigma$.

Thanks to $\Gamma_m$ we will be able to provide an
upper bound on \eqref{eq:sndProba}. Our starting
point is since $(T,T') \sim \UKDistri_{pq}^m \times
\UKDistri_{pq}^m$ then for any $\sigma \in \Gamma_m$, 
the random variable
$\sup_{f \in \bigF_{pq}} \vert \err_T^l [f] - \err_{T'}^l [f] \vert$
follows the same distribution as
$\sup_{f \in \bigF_{pq}} \vert \err_{\sigma(T)}^l [f] -
\err_{\sigma(T')}^l [f] \vert$.

Therefore :

\begin{align}
& \proba_{(T,T') \sim \UKDistri_{pq}^m \times \UKDistri_{pq}^m}
\left( \sup_{f \in \bigF_{pq}} \vert \err_T^l [f] - \err_{T'}^l [f]
  \vert \geq \frac{\epsilon}{2}
\right) \notag \\
& ~ = \frac{1}{2m} \sum_{\sigma \in \Gamma_m}
\proba_{T,T' \sim \UKDistri_{pq}^m \times \UKDistri_{pq}^m} \left(
\sup_{f \in \bigF_{pq}} \vert \err_{\sigma(T)}^l [f] - 
\err_{\sigma(T')}^l [f] \vert \geq \frac{\epsilon}{2}
\right) \notag \\
&~ = \Ex{(T,T') \sim \UKDistri_{pq}^m \times \UKDistri_{pq}^m}
{\frac{1}{2m} \sum_{\sigma \in \Gamma_m} \indicator{
\sup_{f \in \bigF_{pq}} \vert \err_{\sigma(T)}^l [f] -
\err_{\sigma(T')}^l [f] \vert \geq \frac{\epsilon}{2}
}} \notag \\
& ~ \leq \sup_{(T,T') \in (\inputspace \times \outputspace)^{2m}}
\left[
\proba_{\sigma \in \Gamma_m} \left(
\sup_{f \in \bigF_{pq}} \vert \err_{\sigma(T)}^l [f] -
\err_{\sigma(T')}^l [f] \vert \geq \frac{\epsilon}{2}
\right)
\right] \label{eq:thdProba}
\end{align}

And this concludes the second step.

{\bf Reduction to a finite class}
The idea is to reduce $\bigF_{pq}$ in
\eqref{eq:thdProba} to a finite class of functions.
For the sake of conciseness, we will not enter
into the details of the theory of \emph{covering numbers}.
Please refer to the corresponding literature for 
further details (eg. \cite{DevroyeGL}).

In the following, $\bigN({\epsilon}/_8, \bigF_{pq}, 2m)$ will denote the
\emph{uniform ${\epsilon}/_8$convering number} of $\bigF_{pq}$
over a sample of size $2m$.

Let $\bigG_{pq} \subset \bigF_{pq}$ such that $(l^{\bigG_{pq}})_{\vert
(T,T')}$ is an ${\epsilon/_8}$-cover of $(l^{\bigF_{pq}})_{\vert
(T,T')}$.
Thus, $\vert \bigG_{pq} /vert \leq \bigN( {\epsilon}/_8,
l^{\bigF_{pq}}, 2m ) < \infty$ Therefore, if $\exists f \in \bigF_{pq}$
such that $\vert \err_{\sigma(T)}^l [f] - \err_{\sigma(T')}^l [f]
\vert \geq \frac{\epsilon}{2}$ then, $\exists g \in \bigG_{pq}$
such that $\vert \err_{\sigma(T)}^l [g] - \err_{\sigma(T')}^l [g]
\vert \geq \frac{\epsilon}{4}$ and the following comes
naturally

\begin{align}
& \proba_{\sigma \in \Gamma_m} \left(
\sup_{f \in \bigF_{pq}} \vert \err_{\sigma(T)}^l [f] -
\err_{\sigma(T')}^l [f] \vert \geq \frac{\epsilon}{2}
\right) \notag \\
& ~ \leq \proba_{\sigma \in \Gamma_m} \left(
\max_{g \in \bigG_{pq}} \vert \err_{\sigma(T)}^l [g] -
\err_{\sigma(T')}^l [g] \vert \geq \frac{\epsilon}{4}
\right) \notag \\
& ~ \leq \bigN( {\epsilon}/_8, l^{\bigF_{pq}}, 2m ) \max_{g \in \bigG_{pq}}
\proba_{\sigma \in \Gamma_m} \left(
\vert \err_{\sigma(T)}^l [g] - \err_{\sigma(T')}^l [g] \vert 
\geq \frac{\epsilon}{8}
\right) \tag{By union Bound}
\end{align}

{\bf Hoeffding's inequality}

Finally, consider $\vert \err_{\sigma(T)}^l [g] - 
\err_{\sigma(T')}^l [g] \vert$ as the average of $m$
realization of the same random variable, with expectation
equal to $0$. Then by Hoeffding's inequality we have that

\begin{align}
 & \proba_{\sigma \in \Gamma_m} \left(
\vert \err_{\sigma(T)}^l [g] - 
\err_{\sigma(T')}^l [g] \vert \geq \frac{\epsilon}{4}
\right) \leq 2 e^{{-m\epsilon^2}/_{128}} \label{eq:final}
\end{align}

Putting everything together holds the result w.r.t.
$\bigN ({\epsilon}/_8, l^{\bigF_{pq}}, 2m)$ for
$m\epsilon^2 \geq 32$. For $m\epsilon^2 < 32$
it holds trivially.

Remind that $l^{\bigF_{pq}}$ is Lipschitz in its first
argument with a Lipschitz constant $L = 1$ thus
$\bigN ({\epsilon}/_8, l^{\bigF_{pq}}, 2m) \leq 
\bigN ({\epsilon}/_8, \bigF_{pq}, 2m) = O \left(
 \left( \frac{8}{\epsilon} \right)^{\Pdim(\bigF_{pq})} \right)$
\end{proof}

Let us now consider a slightly modified
definition of $\bigF_{pq}$ :

$$ \H{\bigF_{pq}} \doteq \lbrace f : f(\V{x}) \doteq 
\indicator{t(\V{x}) = q} \indicator{\V{x} = \setA{p}^{\alpha}}
\dotProd{\V{w}_p - \V{w}_q}{x} : \V{w}_p, \V{w}_q \in \realset^d
\rbrace $$ 

Clearly, for any fixed $(p,q)$ the same result holds as we
never use any specific information about $\bigF_{pq}$,
except its pseudo dimension which is untouched.
Indeed, for each function in $\H{\bigF_{pq}}$ there is
at most one corresponding affine function.

It come naturally that, fixing $S$ as the training set,
the following holds true :
$$\frac{1}{m} \sum_m  
\indicator{t(\V{x}) = q} \indicator{\V{x} = \setA{p}^{\alpha}}
\V{x} =
\V{z}_{pq}.$$ Thus $$\left| \err_T^l [f] - \err_D^l [f] 
\right| = \left| \dotProd{\frac{\V{w}_p - \V{w}_q}
{\norm{\V{w}_p - \V{w}_q}}}{\V{z}_{pq}} - 
\dotProd{\frac{\V{w}_p - \V{w}_q}
{\norm{\V{w}_p - \V{w}_q}}}{\mupt{p}{q}} \right|.$$

We can generalize this result for any couple $(p,q)$
by a simple union bound, giving the desired 
inequality:

\begin{align*}\proba_{(\inputspace \times \outputspace) \sim \UKDistri} &\left(
\sup_{W \in \realset^{d \times Q}} \left| \dotProd{\frac{\V{w}_p - \V{w}_q}
{\norm{\V{w}_p - \V{w}_q}}}{\V{z}_{pq}} - 
\dotProd{\frac{\V{w}_p - \V{w}_q}
{\norm{\V{w}_p - \V{w}_q}}}{\mupt{p}{q}} \right|\geq \epsilon
\right) \\
 &\qquad\leq  O \left( 4 Q^2 \times
 \left( \frac{8}{\epsilon} \right)^{(n+1)}
e^{ {m\epsilon^2}/_{128} } \right)
\end{align*}

Equivalently, we have that

$$ \left| \dotProd{\frac{\V{w}_p - \V{w}_q}
{\norm{\V{w}_p - \V{w}_q}}}{\V{z}_{pq}} - 
\dotProd{\frac{\V{w}_p - \V{w}_q}
{\norm{\V{w}_p - \V{w}_q}}}{\mupt{p}{q}} \right| \geq \epsilon $$

with probability $1 - \delta$ for 

$$ m \geq O\left( \frac{1}{\epsilon^2} \left[
\ln\left( \frac{1}{\delta} \right) + \ln(Q) + d\ln\left( \frac{1}{\epsilon} \right)
\right] \right) $$

\end{proof}

\end{document}